\documentclass[conference]{IEEEtran}
\IEEEoverridecommandlockouts
\usepackage{cite}
\usepackage{amsmath,amssymb,amsfonts}
\usepackage{graphicx}
\usepackage{booktabs}
\usepackage{xcolor}
\usepackage{url}
\usepackage[hidelinks]{hyperref}

\newtheorem{proposition}{Proposition}
\newtheorem{corollary}{Corollary}

\begin{document}

\title{Memory for Attention:\\ Language-Conditioned Re-Perception with a Vision--Language--Motion Map}
\author{Dibyendu Ghosh}
\maketitle

\begin{abstract}
A robot carrying a persistent, behavior-annotated map faces two very different planning questions, and
its memory answers only one of them well. The \emph{spatial-navigation} question -- how to walk around a
room -- we address first, and report a negative: building on Vision--Language--Motion Maps (VLMM),
a behavior-aware planner cost cuts a planning-time objective by $\sim$35\% over
$28$ AI2-THOR scenes, but under closed-loop execution the benefit nearly vanishes ($\sim$4\%) and an
on-demand vision--language model (VLM) does as well. The \emph{resource-allocation} question is
different: under a limited perception budget, \emph{what should the robot attend to right now} to keep
its own map fresh? Framing re-perception as this attention decision, we show a persistent map's
\emph{memory} -- change-history, or even just recency of last sighting -- yields the best of the re-perception
schedules we test (held-out), while the memoryless category prior is the weakest of them under the
$\sqrt{\cdot}$-law schedule --- though under a Whittle index it leads memory until heterogeneity is
real. The gain grows with per-instance heterogeneity as a Cauchy--Schwarz bound predicts, tracking
$\mathrm{Var}(\sqrt\lambda)$, the variance of root-volatility, and reallocates budget toward the
important objects the schedule protects; against a \emph{real} CLIP movability prior it is
$+21$--$26\%$, of which roughly half survives once that prior's saturated scale is calibrated
($+7$--$13\%$). The map's \emph{full} combination earns its keep when the task is
\emph{language-conditioned}: told what to keep track of, VLMM grounds the relevant objects
(open-vocabulary) and tracks their change (memory), beating a relevance-weighted recency baseline
($+2.9\%$ over $26$ queries, at full heterogeneity; the ordering reverses when instance rates track
category norms) -- and a category prior
($+9.2\%$). The map earns its keep not by telling the robot how to walk around a room, but by telling it
what to pay attention to.
\end{abstract}

\section{Introduction}
The environments a robot operates in rarely hold still: objects are moved, doors open, things are set
down and picked up. A map that records only \emph{what} and \emph{where} things are is therefore
perpetually going out of date; to act well, a robot benefits from also knowing \emph{how} the scene
behaves. Open-vocabulary maps ground language to navigation goals -- VLMaps~\cite{huang2023vlmaps}
localises language-specified landmarks, and ConceptFusion~\cite{jatavallabhula2023conceptfusion},
ConceptGraphs~\cite{gu2024conceptgraphs}, and HOV-SG~\cite{werby2024hovsg} attach features to voxels or
object graphs -- but assume a \emph{static} world. A separate line adds behavior:
Khronos~\cite{schmid2024khronos} builds a spatio-temporal map, and Dewan~\emph{et
al.}~\cite{dewan2017deep} classify LiDAR points as non-movable, movable, or dynamic. Vision--Language--
Motion Maps (VLMM)~\cite{vlmm2026} combine both, annotating each element with an open-vocabulary
feature, a \emph{movability prior} $\rho\in[0,1]$, an \emph{observed} cross-frame motion score $o$, and
a per-element \emph{confidence} $\gamma$.

Given such a map, what is it \emph{for}? The obvious use is \emph{spatial}: feed the behavior fields to
the planner so the robot routes around what might move. We pursue this first and find it wanting. One
\emph{can} encode the map into a planner cost -- a route is good not only if short but if it
\emph{stays valid}, avoiding regions where objects churn -- and on a planning-time objective it helps
substantially. But under closed-loop execution the benefit largely washes out (\S\ref{sec:spatial}):
low-level control and reactive replanning already handle a mover that appears
ahead~\cite{marder2010office}, so shaping the \emph{global} path around \emph{possible} motion buys
little, and an on-demand query does as well as the map's memory. Answering ``how do I walk around this
room?'' is not where a behavior map earns its keep.

Its distinctive value lies in a different kind of planning. A robot's perception is \emph{budgeted} --
it cannot re-observe every object at every step, whether because sensing, travel, or computation is
limited. It must therefore decide \emph{what to attend to}: which map entries to re-check to keep the
representation fresh as the scene changes. This is a \emph{resource-allocation} decision, answering
``what should I pay attention to right now?'' rather than ``how do I move?'', and it is precisely where
a persistent map's \emph{memory} of what has actually been changing should help -- and where an
on-demand vision--language model (VLM), which reports only category-level movability, should not.
Dynamic and lifelong maps~\cite{dynamem2024,dovsg2025,dualmap2025} re-perceive and re-map, but treat
\emph{when} to look as a fixed schedule or a coverage sweep; we ask instead how the map's own history
should \emph{direct} that attention.

Scene-graph memory can even \emph{predict} where change occurs from accumulated
observation~\cite{kurenkov2023scenegraphmemory}, yet existing dynamic maps still take \emph{when} and
\emph{which} to re-observe as given. Deciding \emph{where to look} is studied in active
perception~\cite{bajcsy1988active,bajcsy2018revisiting}, next-best-view~\cite{connolly1985nbv,
bircher2016receding} and persistent monitoring~\cite{smith2012persistent,alamdari2014persistent} -- but
from geometric coverage or \emph{known} dynamics. The refresh problem itself is classic in web
crawling~\cite{cho2003refresh} (a value-weighted $\sqrt{\cdot}$-law optimum), a restless
bandit~\cite{whittle1988restless} whose Whittle index we adopt, and underlies age-of-information
scheduling~\cite{kadota2018scheduling,yates2021aoi} and change-rate
estimation~\cite{avrachenkov2020changerate}. Taking VLMM as the map, we drive the schedule from its own
observed per-instance change history and ask when that beats an on-demand category prior.

\smallskip\noindent\textbf{Contributions.} (i) A distinction between two uses of a behavior-annotated
map: \emph{spatial-navigation} path-shaping, which a closed-loop evaluation shows is marginal
and needs no memory, and \emph{resource-allocation} attention, where memory pays off. (ii) A formulation
of budgeted re-perception as an \emph{attention} decision -- value-weighted staleness minimised by a
$\sqrt{\cdot}$-law schedule, a Cauchy--Schwarz bound predicting \emph{when} the map's memory helps, and
a Whittle-index policy as a proper reference. (iii) An evaluation over $951$ objects and competitive
schedulers showing observed history gives the best schedule -- matching the $\sqrt{\cdot}$-law oracle and
beating age-based scheduling where importance is skewed (recency is the stronger arm when it is
not), reallocating budget toward the important objects,
and holding under a real CLIP prior, observation noise, and a downstream fetch task; the gain tracks the
root-volatility variance $\mathrm{Var}(\sqrt\lambda)$. (iv) The map's \emph{distinctive} use --
\emph{language-conditioned} re-perception: a spoken instruction sets which objects to keep fresh
(open-vocabulary relevance) and the schedule tracks their change (dynamics); an ablation shows a map
with \emph{both} channels beats a static open-vocabulary map, a category prior, and -- crucially -- a
relevance-weighted recency heuristic, so the motion channel adds value \emph{beyond a last-seen
timestamp} and the use does not collapse to ``open-vocabulary map $+$ recency''; dropping either channel
fails.

\section{Methodology}\label{sec:formulation}
\textbf{The attention decision.} Fig.~\ref{fig:method} gives the pipeline. A map holds $N$ elements.
Element $i$ changes (is moved) as a Poisson process of rate $\lambda_i$; if re-observed every $\tau_i$ steps, its time-average \emph{staleness}
(probability the map entry is out of date) is
\begin{equation}\label{eq:stale}
g(\lambda_i\tau_i)=1-\tfrac{1-e^{-\lambda_i\tau_i}}{\lambda_i\tau_i}\approx\tfrac12\lambda_i\tau_i,
\qquad \lambda_i\tau_i\!\ll\!1 .
\end{equation}
Not all entries matter equally: element $i$ carries an importance $w_i$ (a fragile or task-relevant
object costs more when stale). Under a perception budget of $K$ re-observations per step, the attention
policy chooses re-check frequencies $f_i=1/\tau_i$ with $\sum_i f_i=K$ to minimise value-weighted
staleness $S=\sum_i w_i\,g(\lambda_i\tau_i)$. The simulator draws a change
per step with probability $\lambda_i$ (a discrete-time hazard); \eqref{eq:stale} is its continuous
Poisson approximation, which at our operating point overstates the change probability by $1$--$2\%$,
identically for every arm. Our experiments run at
$\lambda_i\tau_i\!\approx\!1$, outside the regime this linearisation assumes; solving the exact
problem there costs the $\sqrt{\cdot}$-law $\sim$3\% and, unlike it, abandons $\sim$13\% of entries
(fast-changing, unimportant ones) entirely.

\begin{proposition}[$\sqrt{\cdot}$-law attention]\label{prop:sqrt}
Minimising $S\approx\sum_i w_i\lambda_i/(2f_i)$ s.t.\ $\sum_i f_i=K$ yields
$f_i^\star\propto\sqrt{w_i\lambda_i}$, with optimum $S^\star=(\sum_i\sqrt{w_i\lambda_i})^2/(2K)$.
\end{proposition}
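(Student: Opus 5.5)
We minimise $S(f)=\sum_i a_i/f_i$ with $a_i:=w_i\lambda_i/2\ge0$ over the open simplex $\{f_i>0,\ \sum_i f_i=K\}$. The plan is to use Cauchy--Schwarz directly, because it gives both the lower bound and the equality case in one step. This is also the inequality the paper later uses to predict when memory helps.

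\textbf{Lower bound.} Write $\sqrt{a_i}=\sqrt{a_i/f_i}\cdot\sqrt{f_i}$ and apply Cauchy--Schwarz to the two vectors $(\sqrt{a_i/f_i})_i$ and $(\sqrt{f_i})_i$:
\begin{equation*}
\Bigl(\sum_i\sqrt{a_i}\Bigr)^2\le\Bigl(\sum_i \tfrac{a_i}{f_i}\Bigr)\Bigl(\sum_i f_i\Bigr)=S\cdot K .
\end{equation*}
Rearranging, every feasible $f$ satisfies $S\ge(\sum_i\sqrt{a_i})^2/K=(\sum_i\sqrt{w_i\lambda_i})^2/(2K)$, because $\sqrt{a_i}=\sqrt{w_i\lambda_i}/\sqrt2$.

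\textbf{Attainment and uniqueness.} Equality in Cauchy--Schwarz holds exactly when the two vectors are proportional, i.e.\ $\sqrt{a_i/f_i}\propto\sqrt{f_i}$. This is equivalent to $f_i\propto\sqrt{a_i}\propto\sqrt{w_i\lambda_i}$. Normalising to the budget gives
\begin{equation*}
f_i^\star=\frac{K\sqrt{w_i\lambda_i}}{\sum_j\sqrt{w_j\lambda_j}} .
\end{equation*}
Substituting back gives $S=\sum_i a_i/f_i^\star=(\sum_j\sqrt{a_j})^2/K$, so the bound is attained.

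As a cross-check, a Lagrangian argument gives the same point. Stationarity requires $-a_i/f_i^2+\mu=0$, hence $f_i=\sqrt{a_i/\mu}$. The objective is strictly convex on $f_i>0$ for every $i$ with $a_i>0$, so this stationary point is the global minimiser.

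\textbf{Main obstacle.} The only delicate point is the degenerate case in which some $w_i\lambda_i=0$. Those coordinates contribute nothing to $S$, so the formula sets $f_i^\star=0$ for them. This means treating them with $\tau_i=\infty$ (they are never re-observed) and using the convention $0/0=0$. We will state this convention explicitly, after which the argument above goes through on the remaining coordinates.

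Finally, the proposition concerns the linearised objective $\sum_i w_i\lambda_i/(2f_i)$, not the exact $g$ in \eqref{eq:stale}. The proof will make clear that the statement is exact only for that surrogate, consistent with the caveat the paper gives about operating at $\lambda_i\tau_i\approx1$.
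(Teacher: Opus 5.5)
Your proof is correct, but its main line differs from the paper's. The paper argues only via the Lagrangian stationarity condition $-w_i\lambda_i/(2f_i^2)+\mu=0$, reading off $f_i\propto\sqrt{w_i\lambda_i}$ and normalising to the budget; that is exactly your cross-check. Leading with Cauchy--Schwarz gives you a \emph{global} lower bound valid for every feasible $f$, together with its equality case. Optimality and uniqueness therefore come out in one step, and the optimal value is the tight bound itself. You do not need the convexity argument that the stationarity proof tacitly relies on; you supply it in your cross-check, and the paper does not. Your inequality also covers Prop.~\ref{prop:cs}: any estimated schedule $f_i\propto\sqrt{w_i\hat\lambda_i}$ is a feasible $f$, so your bound immediately gives $S(\hat\lambda)\ge S^\star$, with equality iff $\hat\lambda_i\propto\lambda_i$. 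What the Lagrangian route offers in exchange is generality, because it does not depend on the objective having the special $\sum_i a_i/f_i$ form. As a KKT condition with inequality constraints, it extends to the exact staleness $g(\lambda_i/f_i)$, where, as the paper notes, some entries end up with $f_i=0$. It also extends to the accumulator's per-step cap $f_i\le 1$. In both settings a one-line Cauchy--Schwarz bound is no longer available. Your explicit handling of $w_i\lambda_i=0$ covers a boundary case the paper leaves implicit.
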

\begin{proof}
$\partial_{f_i}[\sum_j w_j\lambda_j/(2f_j)+\mu(\sum_j f_j-K)]=-w_i\lambda_i/(2f_i^2)+\mu=0$ gives
$f_i\propto\sqrt{w_i\lambda_i}$; normalising by $K$ yields $S^\star$.
\end{proof}

The fractional $f_i$ are realised by an accumulator: each step every element accrues $f_i$ credit and
the $K$ highest-credit elements are re-observed, each paying one credit back, so element $i$ is picked at
long-run rate $f_i$ (capped at one look per step, so any $f_i{>}1$ saturates). The schedule thus needs the \emph{per-instance} rate $\lambda_i$, which must be estimated.
With an estimate $\hat\lambda_i$ the policy uses $f_i\propto\sqrt{w_i\hat\lambda_i}$; writing
$a_i=\sqrt{w_i\hat\lambda_i}$, $b_i=\sqrt{w_i\lambda_i}$, its realised cost is
$S(\hat\lambda)=\tfrac1{2K}(\sum_i a_i)(\sum_i b_i^2/a_i)$.

\begin{proposition}[Attention loss from rate error]\label{prop:cs}
$S(\hat\lambda)\ge S^\star$, with equality iff $\hat\lambda_i\propto\lambda_i$; the excess
$(\sum_i a_i)(\sum_i b_i^2/a_i)/(\sum_i b_i)^2$ is a Cauchy--Schwarz defect that grows as
$\hat\lambda$ diverges from $\lambda$.
\end{proposition}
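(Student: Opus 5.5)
The approach is to apply Cauchy--Schwarz directly to the realised cost, and then read off both the equality case and the monotonicity claim from the Lagrange form of the inequality. I first confirm the expression for $S(\hat\lambda)$ stated just before the proposition. Under the policy $f_i=K a_i/\sum_j a_j$, the linearised cost is
\[
S=\sum_i \frac{w_i\lambda_i}{2f_i}=\sum_i\frac{b_i^2}{2f_i}=\frac{1}{2K}\Bigl(\sum_j a_j\Bigr)\Bigl(\sum_i \frac{b_i^2}{a_i}\Bigr).
\]
Proposition~\ref{prop:sqrt} gives $S^\star=(\sum_i b_i)^2/(2K)$. Hence $S(\hat\lambda)/S^\star$ equals exactly the stated ratio $(\sum_i a_i)(\sum_i b_i^2/a_i)/(\sum_i b_i)^2$, and it suffices to show this ratio is at least $1$.

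For the inequality, I write $b_i=\sqrt{a_i}\cdot(b_i/\sqrt{a_i})$ and apply Cauchy--Schwarz to the vectors $(\sqrt{a_i})_i$ and $(b_i/\sqrt{a_i})_i$. This gives $(\sum_i b_i)^2\le(\sum_i a_i)(\sum_i b_i^2/a_i)$, i.e.\ $S(\hat\lambda)\ge S^\star$. Equality in Cauchy--Schwarz holds iff the two vectors are proportional, that is $b_i/\sqrt{a_i}=c\sqrt{a_i}$, or $b_i=c\,a_i$ for all $i$. This reads $\sqrt{w_i\lambda_i}=c\sqrt{w_i\hat\lambda_i}$, and since $w_i>0$ it is equivalent to $\hat\lambda_i\propto\lambda_i$. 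Implicitly I assume $w_i>0$ and $\hat\lambda_i>0$; otherwise $a_i=0$ and the cost is infinite whenever $b_i>0$, which is consistent with the inequality.

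To make the claim that the excess ``grows as $\hat\lambda$ diverges from $\lambda$'' precise, I use the Lagrange identity
\[
\Bigl(\sum_i a_i\Bigr)\Bigl(\sum_i \frac{b_i^2}{a_i}\Bigr)-\Bigl(\sum_i b_i\Bigr)^2=\frac12\sum_{i,j}a_ia_j\Bigl(\frac{b_i}{a_i}-\frac{b_j}{a_j}\Bigr)^2 .
\]
This exhibits the defect as a nonnegative weighted dispersion of the ratios $r_i=b_i/a_i=\sqrt{\lambda_i/\hat\lambda_i}$. Normalising by $(\sum_i b_i)^2$ gives the relative excess as a weighted variance of $r_i$ under the weights $a_i/\sum_j a_j$; more precisely, it is $\mathrm{Var}_a(r)/(\mathbb{E}_a r)^2$. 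The main obstacle is this interpretive clause. ``Grows as $\hat\lambda$ diverges'' is not a monotonicity statement in any single metric, so I will state it as follows: the excess is zero iff the ratios $r_i$ are constant, and it increases with their weighted spread. As an illustration, for the memoryless category prior $\hat\lambda_i=\bar\lambda_{c(i)}$, the spread of $r_i$ within a category is driven by $\mathrm{Var}(\sqrt{\lambda})$, which links the proposition to the root-volatility variance cited in the abstract.
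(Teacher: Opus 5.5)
Your proof is correct and matches the paper's. Both apply Cauchy--Schwarz to get $(\sum_i a_i)(\sum_i b_i^2/a_i)\ge(\sum_i b_i)^2$ and read the equality case as $a_i\propto b_i$, i.e.\ $\hat\lambda_i\propto\lambda_i$; your check of the formula for $S(\hat\lambda)$ and your explicit use of $w_i>0$ are details the paper leaves implicit. Your Lagrange-identity step goes beyond the paper, which leaves the ``grows as $\hat\lambda$ diverges'' clause informal: writing the ratio as $1+\mathrm{Var}_a(r)/(\mathbb{E}_a r)^2$ with $r_i=\sqrt{\lambda_i/\hat\lambda_i}$ is a good way to make it precise, though your closing link to $\mathrm{Var}(\sqrt\lambda)$ is only heuristic, since for a category prior the relevant quantity is the spread of $\sqrt{\lambda_i/\bar\lambda_{c(i)}}$, as the paper itself cautions.
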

\begin{proof}
Cauchy--Schwarz gives $(\sum_i a_i)(\sum_i b_i^2/a_i)\ge(\sum_i b_i)^2$, equality iff
$a_i\propto b_i$, i.e.\ $\hat\lambda_i\propto\lambda_i$.
\end{proof}

A memoryless VLM baseline is the extreme case of a mis-estimated rate: lacking per-instance history,
it assumes a homogeneous category rate and therefore scans \emph{uniformly}. This yields an especially
clean reading of the gap.
\begin{corollary}[Freshness gap $\propto$ root-volatility variance]\label{cor:var}
For uniform importance and the special case of a \emph{homogeneous} prior -- one rate for every
instance, so the baseline scans uniformly ($f_i{=}K/N$) -- it incurs
$\mathrm{Error}_{\mathrm{prior}}\propto\mathbb{E}[\lambda]$, while the $\sqrt{\cdot}$-law optimum
(Prop.~\ref{prop:sqrt}) incurs $\mathrm{Error}_{\mathrm{Map}}\propto(\mathbb{E}\sqrt{\lambda})^2$. Their
gap is proportional to the variance of the root-volatility,
\begin{equation}\label{eq:var}
\mathrm{Error}_{\mathrm{prior}}-\mathrm{Error}_{\mathrm{Map}}\ \propto\
\mathbb{E}[\lambda]-(\mathbb{E}\sqrt{\lambda})^2=\mathrm{Var}\!\big(\sqrt{\lambda}\big)\ \ge\ 0 .
\end{equation}
\end{corollary}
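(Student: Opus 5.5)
I will work entirely in the linearised regime of \eqref{eq:stale}, where the value-weighted staleness of a schedule $\{f_i\}$ is $S=\sum_i w_i\lambda_i/(2f_i)$. Setting $w_i\equiv 1$, the argument reduces to evaluating this expression for two schedules that share the budget $K$, and then recognising the difference as a variance.

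First, the uniform (homogeneous-prior) schedule. Since the prior gives every instance the same rate, the rule $f_i\propto\sqrt{w_i\hat\lambda_i}$ yields $f_i=K/N$ for all $i$. Substituting gives
\begin{equation*}
S_{\mathrm{prior}}=\sum_i \frac{\lambda_i N}{2K}=\frac{N^2}{2K}\cdot\frac1N\sum_i\lambda_i=\frac{N^2}{2K}\,\mathbb{E}[\lambda],
\end{equation*}
where the expectation is the empirical mean over the $N$ elements.

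Second, the map's schedule. By Prop.~\ref{prop:sqrt} with $w_i\equiv1$, it attains
\begin{equation*}
S^\star=\frac{(\sum_i\sqrt{\lambda_i})^2}{2K}=\frac{N^2}{2K}\,(\mathbb{E}\sqrt\lambda)^2 .
\end{equation*}
This is the cost of a map whose estimate satisfies $\hat\lambda_i\propto\lambda_i$; by Prop.~\ref{prop:cs}, any proportional estimate achieves it. Both errors therefore carry the same constant $N^2/(2K)$. This common constant is what justifies writing ``$\propto$'' in the statement, and it makes the difference $\frac{N^2}{2K}\bigl(\mathbb{E}[\lambda]-(\mathbb{E}\sqrt\lambda)^2\bigr)$.

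Third, the variance identity. Put $X=\sqrt\lambda$, so that $\mathbb{E}[X^2]=\mathbb{E}[\lambda]$. Then $\mathbb{E}[\lambda]-(\mathbb{E}\sqrt\lambda)^2=\mathbb{E}[X^2]-(\mathbb{E}X)^2=\mathrm{Var}(\sqrt\lambda)$, which is nonnegative. It vanishes exactly when all $\lambda_i$ are equal. As a consistency check, this agrees with the equality case of Prop.~\ref{prop:cs} specialised to $a_i\equiv$ const: the gap is zero precisely when the uniform schedule is already proportional to $\sqrt{\lambda_i}$.

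The only delicate point is scope rather than algebra. The claim holds for the linearised objective and for idealised fractional frequencies $f_i$. I will state explicitly that it describes the $\lambda_i\tau_i\ll1$ surrogate and not the exact $g$ at the paper's operating point $\lambda_i\tau_i\approx1$. I will also note that the accumulator cap ($f_i\le1$) is assumed inactive; if it binds for very volatile elements, $S^\star$ is no longer given by the closed form, and the identity becomes an approximation.
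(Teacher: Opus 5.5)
Your proof is correct and follows the paper's own argument: evaluate the linearised cost $\sum_i\lambda_i/(2f_i)$ at the uniform schedule $f_i=K/N$ and at the $\sqrt{\cdot}$-law optimum of Prop.~\ref{prop:sqrt}, subtract, and recognise $\mathbb{E}[\lambda]-(\mathbb{E}\sqrt\lambda)^2$ as $\mathrm{Var}(\sqrt\lambda)\ge 0$. Your explicit check that both errors carry the same constant $N^2/(2K)$ is a small but real tightening. The paper writes the uniform cost as $\sum_i\lambda_i/f_i$, dropping the $\tfrac12$, and asserts only that each term is separately proportional to its mean, which by itself does not make the difference proportional to $\mathrm{Var}(\sqrt\lambda)$.
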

\begin{proof}
Uniform $f_i{=}K/N$ gives $\sum_i\lambda_i/f_i=(N/K)\sum_i\lambda_i\propto\mathbb{E}[\lambda]$;
Prop.~\ref{prop:sqrt} gives $S^\star=(\sum_i\sqrt{\lambda_i})^2/(2K)\propto(\mathbb{E}\sqrt{\lambda})^2$.
Subtracting and using $\mathrm{Var}(\sqrt\lambda)=\mathbb{E}[(\sqrt\lambda)^2]-(\mathbb{E}\sqrt\lambda)^2
=\mathbb{E}[\lambda]-(\mathbb{E}\sqrt\lambda)^2$ gives \eqref{eq:var}; non-negativity is Jensen.
\end{proof}
Eq.~\eqref{eq:var} is the interpretable form of Prop.~\ref{prop:cs}: the value of memory equals the
\emph{heterogeneity} of the scene's dynamics, measured as $\mathrm{Var}(\sqrt\lambda)$. It vanishes
when all instances share a rate and grows with that spread. With importance weights the same argument
gives a gap in $\mathrm{Var}(\sqrt{w\lambda})$. The baseline here scans \emph{uniformly}; against the
category-prior schedule we evaluate, the defect is no longer a function of
$\mathrm{Var}(\sqrt\lambda)$ alone --- it depends on the joint distribution of rates, estimates and
weights --- though empirically it tracks that scalar with an offset set by the prior's own
(\S\ref{sec:gap}). One caveat: the baseline models a VLM returning the
category \emph{mean} (a real VLM may do better or worse). The $\sqrt{\cdot}$-law's own suboptimality --
the source of the $h{=}0$ floor -- is taken up with the Whittle-index reference.

\textbf{Where the rate estimate comes from.} Two sources compete. An \emph{on-demand VLM} returns the
category movability prior $\rho_i$ -- one value per object \emph{type}. The \emph{map's memory}
accumulates each element's observed change history and forms a shrunken rate
$\hat\lambda_i=(m_i+\kappa\lambda_{\max}\rho_i)/(t_i+\kappa)$, where $m_i$ counts the
\emph{re-observations that found the entry changed} over $t_i$ observed steps (pseudo-count $\kappa$, in
rate units). Reporting only \emph{whether} an entry changed censors multiple changes between looks, so
$\hat\lambda_i$ is biased low the faster the entry is, compressing the estimated spread and making the
effect below \emph{conservative}. We report a \emph{held-out} estimate throughout and
separate the memory-attributable gain from the schedule-level floor.

The scheduler consumes two of the map's fields --- the category
movability prior $\rho_i$ and the per-instance change history --- and not the observed-motion score
or the confidence channel; and it decides \emph{which} entries to re-check, assuming a downstream
viewpoint or navigation module attempts the observation. Travel cost and viewpoint co-visibility are
outside the scheduling layer, and \S\ref{sec:spatial} is where they are measured.

\textbf{Heterogeneity is what memory buys.} By Prop.~\ref{prop:cs}, the prior is exact only when true
rates match the category norm; its loss grows with per-instance deviation. We model this with a knob
$h\in[0,1]$: $\lambda_i=\lambda_{\max}[(1-h)\rho_i+h\,u_i]$, $u_i\!\sim\!\mathrm{U}(0,1)$. At $h{=}0$
every instance equals its category prior (memory has nothing to add); at $h{=}1$ instance rates are
independent of category (only history can know them). Real scenes lie in between -- two chairs of the
same type differ in how often they are used -- so the memory's value is an empirical question of how
much heterogeneity a deployment holds. This knob is not a pure intervention: since
$\mathbb{E}[\rho]{=}0.55\neq\mathbb{E}[u]$, raising $h$ also drifts the mean rate by $5\%$. Repeating
the sweep with a mean-preserving construction (zero-mean perturbations about each object's own prior)
leaves the advantage at $1.7, 1.9, 2.4, 3.5, 6.2\%$ against $1.7, 2.0, 2.3, 3.6, 6.7\%$ in the same
$16$-seed control (Table~\ref{tab:main} reports the $40$-seed run), so the effect
is heterogeneity, not the drift.

\begin{figure*}[t]
\centering
\includegraphics[width=0.98\textwidth, trim=0 0.55cm 0 0, clip]{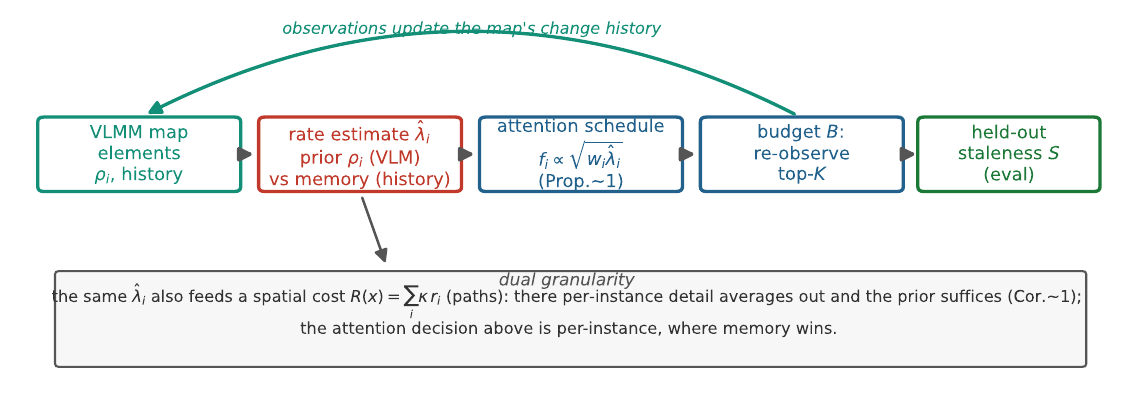}
\caption{\textbf{Budgeted re-perception as attention.} The rate estimate comes from the category prior
\emph{or} from observed history; the $\sqrt{\cdot}$-law sets each re-check frequency and the budget
picks $K$ per step. \emph{Dual granularity} (bottom): the same estimate feeds a spatial cost, where
per-instance detail averages out.}
\label{fig:method}
\end{figure*}

\section{Results and Discussion}\label{sec:results}
\textbf{Setup.} We use AI2-THOR~\cite{kolve2017ai2thor}. For the attention study we pool $951$ movable
object instances from $40$ scenes, each with its category movability prior; parameters
$\lambda_{\max}{=}0.12$, horizon $T{=}400$, budget $K{=}6\%$ of $N$ per step unless swept, importance
$w_i$ log-normal. Seed counts are $40$ (Table~\ref{tab:main}), $24$ (Tables~\ref{tab:whittle} and
\ref{tab:sens}) and $16$ (Fig.~\ref{fig:baselines} and the CLIP arm); $t$-statistics are paired over
seeds within a fixed object pool except for the fetch task, which is clustered over $25$ scenes.
To isolate the value of memory cleanly, we use a \emph{held-out} estimate: the change
history is accumulated on one realisation of the process, frozen, and applied as a static schedule to an
\emph{independent} realisation -- so the memory policy differs from the prior only in the rate estimate,
with no online adaptation and no scoring on the data it was fit to. We report the memory-attributable
reduction: the held-out advantage over the prior at heterogeneity $h$ minus its value at $h{=}0$ (a
schedule-level floor that is not memory).

\subsection{Why not path-shaping: the spatial-navigation negative}\label{sec:spatial}
We first report the spatial use that motivated the shift. Encoding VLMM's fields as a
planner cost and routing to minimise expected disruptions cuts that objective $\sim$35\% over $28$
simulated scenes. But the objective is close to what the cost minimises; on an \emph{independent} closed-loop
execution (objects actually relocating, counting real replans) the reduction is only $\sim$4\% at
$+14\%$ travel, and the map's observed history does no better than -- indeed slightly worse than -- the
category prior ($4.6$ vs.\ $4.3$ replans). The reason is structural: routing aggregates cost over a
\emph{region}, and summation averages out per-instance deviation, so a coarse prior suffices. The
per-instance decision -- \emph{which} element to attend to -- is where the prior's bias is exposed and
the memory pays off, consistent with Prop.~\ref{prop:cs}. This refines rather than contradicts the
case for behavior annotation~\cite{vlmm2026}: the movability/motion attribute is evaluated there as a
map-quality attribute; its \emph{downstream} payoff, we find, lies in re-perception, not in path-shaping
-- a planning-time proxy improves $\sim$35\% but the closed-loop gap is small.

\subsection{Memory directs attention better than a category prior}
Over $951$ objects (Table~\ref{tab:main}), the held-out history lowers value-weighted staleness below
the category prior by a margin that grows with heterogeneity: $1.7,1.9,2.3,3.4,6.5\%$ at
$h{=}0,0.25,0.5,0.75,1.0$. Two caveats bound this. First, $\sim$1.7\% persists at $h{=}0$,
where there is \emph{no} per-instance variation to learn; this floor is not memory but a suboptimality
of the $\sqrt{\cdot}$-schedule on the discrete problem (a noisy estimate can out-schedule the true-rate
one). The \emph{memory-attributable} effect is therefore the heterogeneity-driven increase -- up to
$\sim$5 points at $h{=}1$, tapering to nothing when scenes are homogeneous. Second, a naive
\emph{online} estimator lets an arm adapt on the realisation it is scored on; we therefore report the
held-out protocol throughout, which is the conservative choice here --- it \emph{raises} the measured
advantage by $0.7$--$1.1$ points at every $h$ rather than lowering it. Net: the map's memory earns its place
in the attention policy when the decision is per-instance \emph{and} instances depart from their
category norm.

\textbf{Memory protects the objects that matter.} The $\sqrt{w_i\lambda_i}$ schedule \emph{reallocates}
the budget toward important entries, so memory's benefit is concentrated on them. Restricted to the
top-$10\%$-importance objects, the memory-attributable reduction (net of the $h{=}0$ floor) reaches
$+7.7\%$ at $h{=}1$, versus $+4.9\%$ on the mean. These are percentages of different baselines
(the important objects are already fresher), so the ratio is not itself a measure of reallocation:
in absolute staleness the important objects gain more, but once the $h{=}0$ floor is removed they
gain slightly less than the pool.
(We report the floor-subtracted figure: the raw top-$10\%$ gap is $\sim$11\% already at $h{=}0$, where
there is no per-instance information, so it is the schedule artifact of Table~\ref{tab:whittle}, not
memory.) Attention \emph{is} this reallocation -- memory trades freshness on churny unimportant objects
to protect the fragile, task-relevant ones.

\begin{table}[t]
\centering
\caption{Held-out staleness and memory advantage by heterogeneity $h$ ($N{=}951$).}
\label{tab:main}
\footnotesize\setlength{\tabcolsep}{5pt}
\begin{tabular}{lrrrrr}
\toprule
heterogeneity $h$ & $0$ & $0.25$ & $0.5$ & $0.75$ & $1.0$\\
\midrule
prior (category) & $.294$ & $.290$ & $.284$ & $.275$ & $.263$\\
memory (history)      & $.289$ & $.285$ & $.277$ & $.265$ & $.246$\\
advantage (\%)        & $1.7$ & $1.9$ & $2.3$ & $3.4$ & $6.5$\\
memory-attributable (\%) & $0.0$ & $0.2$ & $0.6$ & $1.7$ & $4.9$\\
\bottomrule
\end{tabular}
\end{table}

\subsection{Against competitive schedulers: the prior is the weak link}\label{sec:baselines}
So far memory beat a category prior -- but that prior and the memory policy share one $\sqrt{\cdot}$-law
schedule, so the comparison isolates the rate estimate, not a deployable system. We add the freshness
schedulers a practitioner would actually build and run them all \emph{online} (each adaptive, fair to
each other): a memory-light \emph{oldest-first} (observe the $K$ highest $w_i\!\cdot\!\text{age}_i$, no
rate model), weighted round-robin, and a model-free \emph{Thompson-sampling} learner (Gamma posterior
on $\lambda_i$). Two things stand out (Fig.~\ref{fig:baselines}). First, the category prior is the
\emph{weak link}: even memory-light oldest-first beats it at every $h$, and weighted round-robin ---
which uses no map at all --- is within $0.8$ points of it and overtakes it at $h{=}1$. Second, the
\emph{history}-based policies win: Thompson tracks memory, and memory matches the true-rate schedule ($0.2483$
vs.\ $0.2478$ at $h{=}1$; the latter is not an upper bound, and held-out memory in fact edges it at
every $h$). Memory's edge over the best memory-light scheduler is smaller and
conditional -- $+0.1$ to $+3.8\%$ over oldest-first under skewed importance, and it \emph{loses} to
oldest-first under uniform importance until $h{=}1$. In sum: observed history yields the best
re-perception schedule we test, matching the true-rate $\sqrt{\cdot}$ schedule to within $0.2\%$
(the two are not separated by a paired test); the category prior is the weakest arm here, though
Table~\ref{tab:whittle} shows it ahead of memory at low heterogeneity under the Whittle index; and
memory's specific advantage over trivial age-scheduling emerges when importance is skewed and
rates are heterogeneous.

\begin{figure}[tb]
\centering
\includegraphics[width=0.76\columnwidth,trim=0 0.45cm 0 0,]{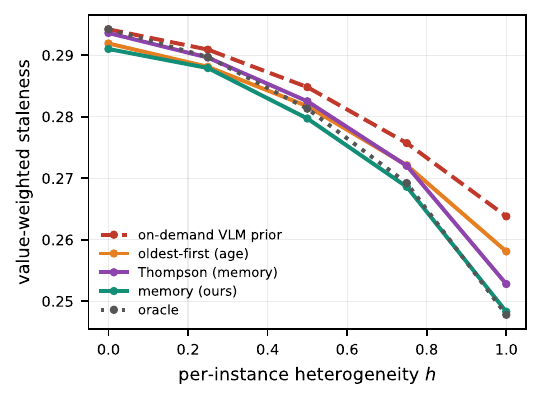}
\caption{Competitive schedulers, run online (skewed importance). Even memory-light oldest-first beats
the category prior (red, dashed); the history-based policies win.}
\label{fig:baselines}
\end{figure}

\subsection{A restless-bandit index removes the schedule floor}\label{sec:whittle}
Budgeted re-perception is a \emph{restless multi-armed bandit}~\cite{whittle1988restless}: each entry is
an arm whose state is its age $\tau$ (steps since last seen), with holding cost
$h(\tau){=}1{-}(1{-}\lambda)^\tau$ -- exact for the per-step change process we simulate, of which
\eqref{eq:stale} is the continuous approximation to $O(\lambda^2)$ -- and a reset on observation. For this deterministic-age bandit the
Whittle index is closed-form,
\begin{equation}\label{eq:whittle}
W_i(\tau)=\sum_{s=0}^{\tau-1}\!\big[h(\tau){-}h(s)\big]
 = w_i\Big[\tfrac{1-(1-\lambda_i)^{\tau}}{\lambda_i}-\tau(1-\lambda_i)^{\tau}\Big],
\end{equation}
increasing in age, and in rate only while an entry is fresh: $W_i\!\to\!w_i/\lambda_i$, so past
$\tau\!\approx\!17$ steps it prefers the \emph{slower} entry, whose reset buys more freshness. The policy
re-observes the $K$ entries of highest $W_i(\tau_i)$.
Two consequences (Table~\ref{tab:whittle}). First, the Whittle policy beats the $\sqrt{\cdot}$-law
($4.1\%$ lower staleness with true rates), confirming the $\sqrt{\cdot}$-law is a heuristic. Most of
that gap is the linearisation of \eqref{eq:stale} rather than discreteness: the schedule runs at a
median $\lambda_i\tau_i\!\approx\!1$ (interquartile $0.6$--$1.6$), where the exact continuous optimum
already recovers $3.2$ of the $4.1$ points, falling to $1.2$ at $\lambda_i\tau_i{=}0.6$. Second -- and this \emph{removes} the $h{=}0$ floor above -- under the proper index a noisy history estimate no longer out-schedules the
true-rate policy: memory sits $1.2$--$2.5\%$ \emph{above} the Whittle oracle at every $h$. These arms are scored
\emph{online}, the conservative direction here (held-out: $-0.8\%$ to $+4.9\%$).
The core claim survives the better scheduler, and cleaner: memory beats the prior only where
heterogeneity is real ($-1.2\%$ at $h{=}0$ rising to $+4.5\%$ at $h{=}1$), with no floor to subtract --
the crossover \emph{is} the memory-attributable effect.

\begin{table}[t]
\centering
\caption{Whittle-index scheduler ($N{=}951$): value-weighted staleness.}
\label{tab:whittle}
\footnotesize\setlength{\tabcolsep}{4pt}
\begin{tabular}{lrrrrr}
\toprule
heterogeneity $h$ & $0$ & $0.25$ & $0.5$ & $0.75$ & $1.0$\\
\midrule
$\sqrt{\cdot}$-law oracle   & $.295$ & $.290$ & $.282$ & $.269$ & $.248$\\
Whittle oracle              & $.282$ & $.278$ & $.271$ & $.259$ & $.237$\\
Whittle prior (VLM)         & $.282$ & $.279$ & $.274$ & $.266$ & $.254$\\
Whittle memory              & $.285$ & $.282$ & $.275$ & $.264$ & $.243$\\
\midrule
memory vs.\ prior (\%)      & $-1.2$ & $-0.9$ & $-0.3$ & $+0.9$ & $+4.5$\\
\bottomrule
\end{tabular}
\end{table}

The effect is not an artifact of the Poisson model. Re-running the held-out test under three other
processes with identical per-object mean rates -- \emph{bursty} (Markov-modulated), \emph{correlated}
(self-exciting), and \emph{diurnal} (periodic) -- the memory-attributable reduction stays positive and
significant ($+1.7\%$ Poisson, $+0.9$ bursty, $+2.2$ correlated, $+1.5$ diurnal; all $t{>}3.8$),
attenuated under burstiness because clustered
changes make per-instance rates harder to estimate. Across a sensitivity sweep (Table~\ref{tab:sens})
it grows with observation horizon ($+0.9\!\to\!+2.0\%$ from $100$ to $800$ steps) and importance skew
($+1.2\%$ uniform to $+2.9\%$), is robust to prior strength, and is \emph{budget-gated}: near zero at a
tight $3\%$ budget, rising to $+5.4\%$ at $20\%$ -- memory helps once the budget is large enough to act
on per-instance differences.

\textbf{Observation noise.} The estimates so far assume perfect re-perception; real observation misses
or hallucinates changes. Flipping each observed-change bit with probability $\epsilon$, the memory
advantage is robust to moderate noise -- at $h{=}1$ it is essentially intact to $\epsilon{=}0.2$
($+4.2\%$, from $+4.9\%$ clean) and still positive at $\epsilon{=}0.3$ ($+2.6\%$) -- but degrades to a
\emph{loss} once observations are near-random ($\epsilon{=}0.5$: $-3.2\%$, worse than the prior), as a
corrupted history is worse than a category average. A deployed policy should therefore fall back to the
prior when observation reliability is low. For context, under skewed importance a no-map uniform schedule is worse
throughout (staleness $\sim$0.34--0.39 vs.\ $\sim$0.26 for the prior), though under \emph{uniform}
importance the ordering reverses and the prior trails a uniform schedule at every $h$: a category prior already captures
most of the achievable freshness, and memory supplies the per-instance increment a prior structurally
cannot.

\begin{table}[t]
\centering
\caption{Sensitivity of the memory-attributable reduction (\%, held-out, Poisson, $N{=}951$).}
\label{tab:sens}
\footnotesize\setlength{\tabcolsep}{4pt}
\begin{tabular}{llrrrr}
\toprule
knob & values & \multicolumn{4}{c}{memory-attributable reduction}\\
\midrule
budget $K$        & $3/6/12/20\%$      & $-0.1$ & $+1.7$ & $+4.2$ & $+5.4$\\
obs.\ horizon     & $100/200/400/800$  & $+0.9$ & $+1.6$ & $+1.7$ & $+2.0$\\
importance skew   & $0/0.5/1/2$        & $+1.2$ & $+1.3$ & $+1.7$ & $+2.9$\\
prior pseudocount & $1/3/10/30$        & $+1.6$ & $+1.7$ & $+1.9$ & $+2.0$\\
\bottomrule
\end{tabular}
\end{table}

\subsection{The freshness gap tracks root-volatility variance}\label{sec:gap}
Corollary~\ref{cor:var} predicts the memory advantage should be governed by a single scalar --
$\mathrm{Var}(\sqrt\lambda)$, the heterogeneity of the scene's root-volatility. We test this directly, regressing the realised
held-out memory reduction on the measured $\mathrm{Var}(\sqrt\lambda)$ over $11$ heterogeneity levels. The
relationship is \emph{linear} ($R^2{=}0.99$), consistent with the corollary's form. We claim no more
than that: Cor.~\ref{cor:var} is proved for uniform importance and a homogeneous prior, and for a
general prior the Cauchy--Schwarz defect depends on the joint distribution of rates, estimates and
weights rather than on this one scalar. The sweep is a single-parameter family, so it cannot identify
the exponent, and over $h\!\le\!0.2$ neither $\mathrm{Var}(\sqrt\lambda)$ nor the gap moves
monotonically in $h$ -- both sit within noise of zero, consistent with the threshold below. Moreover, the fit's
$x$-intercept ($0.0008$) coincides with the prior's own $\mathrm{Var}(\sqrt\lambda)$ ($0.0009$): memory
pays only for heterogeneity the category prior cannot already explain, and adds nothing below that
threshold. How spread out the instance volatilities are thus predicts, quantitatively, when the map's
memory is worth its perception budget.

\subsection{A real vision-language prior (AI2-THOR $+$ CLIP)}\label{sec:clip}
The on-demand VLM has so far been \emph{modelled} by a category movability table. We now replace it with
a genuine one: over $322$ objects across $8$ scenes we render each object from AI2-THOR and compute a
CLIP~\cite{radford2021clip} zero-shot movability prior from its image (ViT-B/16,
\texttt{laion2b\_s34b\_b88k}; the two prompts ``a movable household object'' / ``a fixed built-in
fixture'', softmax over $100\times$ cosine, on the largest crop of at least $300$\,px; prompts were not
tuned).
CLIP is a \emph{weak} movability estimator -- its prior correlates only $r{=}0.40$ with ground-truth
movability -- confirming that appearance is a poor cue for how often an object \emph{actually} moves.
Against this real VLM prior, the map's observed-change memory wins decisively (held-out): staleness
reduction $+21.4, 23.9, 24.7, 25.1, 26.3\%$ at $h{=}0,0.25,0.5,0.75,1.0$ (all $t{>}23$ over $16$
seeds). The result has two mechanisms, which we keep separate. The $\sim$21\% base is
memory correcting the VLM prior, but most of that is the prior's \emph{scale}, not its accuracy: the
$100\times$ softmax saturates $\rho_{\text{clip}}$ (dispersion $\sigma/\mu{=}1.01$ against $0.75$ for
true movability, with $16.5\%$ of objects at ${\approx}0$ and $6.5\%$ at ${\approx}1$), and the
$\sqrt{\cdot}$-law is sensitive to spread. Rescaling the \emph{same} CLIP scores to the mean and spread of true
movability -- an affine map, so the ranking and the correlation $r{=}0.40$ are both exactly
unchanged -- leaves $+7.3, 9.0, 12.9\%$ at $h{=}0,0.5,1$ (matching the full movability
\emph{distribution} instead gives $+8.1, 9.5, 13.1\%$). So a better-\emph{calibrated} readout shrinks this term; a
stronger VLM need not. The
$+4.8\%$ rise from $h{=}0$ to $h{=}1$ is the \emph{per-instance heterogeneity} effect: the
VLM-independent core claim, since no appearance model can see how often a specific instance is used.
We report the part that cuts against us too: against a \emph{clean} category prior on the same objects
(an idealised VLM a real robot lacks -- it only sees appearance), memory is \emph{worse} at low
heterogeneity ($-9\%$ at $h{=}0$, $-2.6\%$ at $h{=}0.25$) and wins only for $h\!\ge\!0.5$. A controlled
prior-quality sweep (Fig.~\ref{fig:prior}) makes the decomposition precise: as a category prior's
accuracy $r$ (correlation with true movability) rises from $0.2$ to $0.99$, the base ($h{=}0$)
advantage shrinks from $+9\%$ to $+3\%$, while the heterogeneity slope ($\sim$5\%) is roughly constant
in $r$. The sweep varies correlation at fixed calibration on the $951$-object pool, so it does not place CLIP:
its $+21\%$ at the same $r{=}0.40$ reflects mis-calibrated \emph{scale}, which the $\sqrt{\cdot}$-law
also sees. The VLM-independent contribution is that
slope. Object appearance and the VLM prior are real here; the change process and re-observation remain
simulated (natural heterogeneity would need real longitudinal data).

\begin{figure}[tb]
\centering
\includegraphics[width=0.76\columnwidth,trim=0 0.35cm 0 0,]{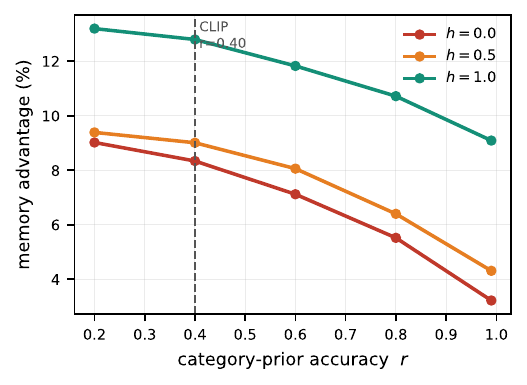}
\caption{Prior-quality sweep. A better category prior shrinks the base ($h{=}0$) advantage, but the
heterogeneity slope is roughly constant in $r$. Correlation alone does not place CLIP on this axis.}
\label{fig:prior}
\end{figure}

\subsection{Downstream fetch task: does fresher memory help the robot?}\label{sec:fetch}
Staleness is an intrinsic proxy; we put it to a task. In $25$ AI2-THOR scenes the robot runs an
object-fetch loop under the budget: objects relocate at $\lambda_i$, the robot re-observes $K$ entries
per its schedule (updating remembered locations), and requests arrive for object $j\!\sim\!w_j$; a
request to a \emph{stale} entry is a wasted trip, costing the extra travel to the true location -- a
geometric cost the staleness objective never sees. The result confirms the effect and sharpens the
reframe. Memory-scheduling incurs fewer wasted trips than the category prior -- $+4.9\%$ at $h{=}1$, matched in
excess travel ($+4.4\%$, $t{=}4.4$ over $25$ scenes $\times$ $6$ seeds) -- so a fresher map does translate to a cheaper task.
But at moderate heterogeneity ($h{=}0.5$) memory ties the prior ($+0.3\%$, n.s.), and the memory-light
\emph{oldest-first} recency heuristic matches or beats memory on the task ($-4.2\%$ at $h{=}0.5$, tie at
$h{=}1$). The lesson: what beats the \emph{memoryless} category prior is using the map's
observations at all -- as change-history \emph{or} simply as recency of last sighting; the persistent
map earns its keep, but on this generic, importance-random fetch task sophisticated per-instance rate
modelling is not needed over cheap recency; the rate channel's distinct value re-emerges under language
conditioning ($\S$\ref{sec:langattn}), where it beats a relevance-weighted recency baseline
($+2.9\%$ over $26$ stratified queries, at full heterogeneity).

\textbf{Rendered perception.} The observation model so far is an abstract bit-flip; we now drive it
from detection on rendered AI2-THOR views. Over
$138$ objects in $8$ scenes we measure each object's \emph{detectability} -- rendering the scene from
many agent poses and running instance detection, then taking the fraction of viewpoints from which the
object is detected with a usable crop. It is low and heavy-tailed (mean $0.15$ from random viewpoints;
small or occluded objects are rarely seen), and we run the re-perception loop with each observation
succeeding at its real rate. The memory advantage turns out to be \emph{conditional on observation
reliability}: at mean success $0.15, 0.30, 0.48, 0.60, 0.67$ it reads
$-2.8, +0.4\,\text{(n.s.)}, +2.2, +1.7\,\text{(n.s.)}, +2.4\%$ ($138$ objects, $h{=}1$). These are raw
advantages; subtracting the $h{=}0$ floor as elsewhere leaves $-2.5$ to $+0.8\%$, so under degraded
observation the memory-attributable part is close to zero -- so a robot that navigates to a usable viewpoint gains, while at the pessimistic
random-viewpoint mean of $0.15$ the advantage \emph{reverses}. The mechanism is
structural, and matches the observation-noise sweep: memory's rate estimate is only as good as its
observations, so when perception frequently fails a fixed category prior wins. Observed history helps
for re-perception \emph{provided the robot can actually re-perceive}.

\subsection{Language conditions what to keep fresh -- the map's distinctive use}\label{sec:langattn}
The results so far use a fixed importance $w_i$. But the point of an \emph{open-vocabulary} map is that
a task can \emph{name} what matters. We instantiate this: a language instruction (``a cup'', ``something
to sit on'') sets each element's importance by its CLIP relevance
$w_i{=}s_i(q)$: the similarity $\mathbf{f}_i^{\!\top}\mathrm{CLIP}_{\mathrm{txt}}(q)$, scene-centred,
clipped at zero and rescaled to $(0,1]$ with a small floor so $\sqrt{w_i\hat\lambda_i}$ is real. The
schedule keeps the
\emph{relevant} objects fresh. Relevance and change rate are \emph{independent} (measured correlation
$\approx 0$), so a good schedule needs \emph{both}. Over $159$ objects and $26$ queries
(Fig.~\ref{fig:langattn}), a map using both -- open-vocabulary relevance \emph{and} observed dynamics
($f_i\propto\sqrt{s_i(q)\,\hat\lambda_i}$) -- achieves the lowest relevance-weighted staleness. The
\emph{decisive} baseline is \emph{relevance-weighted oldest-first} ($s_i(q)\!\cdot\!\text{age}_i$) -- a
static open-vocabulary map plus a last-seen timestamp, with \emph{no} rate channel; it is the strongest
competitor (recency is a strong policy, as the fetch task showed). VLMM beats it by $+2.9\%$ over $26$ queries
stratified into categories, affordances, attributes, task phrases and paraphrases
($95\%$ CI $[2.5, 3.2]$, $26/26$ queries, every stratum between $+2.4$ and $+3.6\%$, $8$ seeds each).
The margin is robust to wording but not to \emph{heterogeneity}: this arena draws instance rates
independently of the category prior, i.e.\ $h{=}1$, and at $h{=}0.5$ and $h{=}0$ the ordering reverses
($-3.1\%$ and $-4.0\%$, losing on every query). A true-rate schedule gains only $+0.5\%$ over the same
baseline, so at $h{=}1$ the margin is largely the schedule-level effect of \S\ref{sec:results} rather
than rate information. The honest claim is therefore narrow: when instance rates depart fully from
category norms, the motion channel adds value beyond a timestamp; when they do not, recency wins. Dropping language grounding entirely (recency, no query) is $+39\%$ worse -- establishing the
language channel is necessary -- and a category prior (relevance $\times$ prior) is $+9.2\%$
worse. So language-conditioned re-perception needs open-vocabulary grounding \emph{and} per-instance
dynamics together -- a task a static open-vocabulary map and a plain dynamics map both fail. (The
held-out protocol of $\S$\ref{sec:results} gives the same ordering with larger gaps; we report the more
conservative online arena, in which recency is the strongest baseline.)

\begin{figure}[tb]
\centering
\includegraphics[width=0.70\columnwidth,trim=0 0.35cm 0 0,]{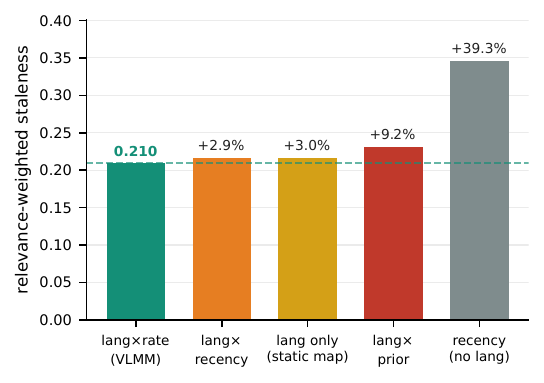}
\caption{Language-conditioned re-perception over the $26$ stratified queries ($N{=}159$, $8$ seeds,
$h{=}1$; online arena, relevance-weighted staleness, lower better). Using both channels
(lang$\times$rate) beats every ablation, including the strongest competitor, relevance-weighted
oldest-first ($+2.9\%$, $26/26$ queries).}
\label{fig:langattn}
\end{figure}

\section{Conclusions}
A behavior-annotated map is asked two questions and answers them differently. To ``how do I walk around
this room?'' its memory adds little. To ``what should I attend to right now?'' -- which element to
re-observe, under a budget -- the map's memory of what has actually been changing beats a
\emph{memoryless} category prior \emph{once instance rates depart from category norms}, tracking
$\mathrm{Var}(\sqrt\lambda)$ (Cor.~\ref{cor:var}); a Whittle-index reference sharpens this into a
clean crossover, the prior leading below it; told \emph{what} to keep fresh, its two channels together
beat a static open-vocabulary map, a category prior, and a relevance-weighted recency heuristic. Three
conditions bound this. The rate channel's edge over cheap \emph{recency} is setting-dependent --
negligible on the generic fetch task, small but significant under language conditioning -- so the robust
claim is that observation-based scheduling beats the memoryless prior. The advantage is \emph{conditional on observation reliability}, reversing below a
$\sim$0.3 success rate under measured AI2-THOR detectability, and the change process is simulated with a
parametric heterogeneity $h$ whose real-world value we do not measure -- a live VLM \emph{and} measured
longitudinal dynamics together are the experiment left to future work. The map's memory is a tool for
resource-allocation planning, not spatial-navigation planning: it tells the robot what to pay attention
to.

\bibliographystyle{IEEEtran}
\bibliography{refs}

\begin{thebibliography}{10}
\providecommand{\url}[1]{#1}
\csname url@rmstyle\endcsname
\providecommand{\newblock}{\relax}
\providecommand{\bibinfo}[2]{#2}
\providecommand\BIBentrySTDinterwordspacing{\spaceskip=0pt\relax}
\providecommand\BIBentryALTinterwordstretchfactor{4}
\providecommand\BIBentryALTinterwordspacing{\spaceskip=\fontdimen2\font plus
\BIBentryALTinterwordstretchfactor\fontdimen3\font minus
  \fontdimen4\font\relax}
\providecommand\BIBforeignlanguage[2]{{%
\expandafter\ifx\csname l@#1\endcsname\relax
\typeout{** WARNING: IEEEtran.bst: No hyphenation pattern has been}%
\typeout{** loaded for the language `#1'. Using the pattern for}%
\typeout{** the default language instead.}%
\else
\language=\csname l@#1\endcsname
\fi
#2}}

\bibitem{vlmm2026}
D.~Ghosh and A.~Shakya, ``{Vision-Language-Motion Maps}: An open-vocabulary,
  uncertainty-aware, queryable motion attribute for {3D} scene maps,''
  \emph{arXiv:2607.16173}, 2026.

\bibitem{huang2023vlmaps}
C.~Huang, O.~Mees, A.~Zeng, and W.~Burgard, ``Visual language maps for robot
  navigation,'' in \emph{Proc. IEEE ICRA}, 2023.

\bibitem{jatavallabhula2023conceptfusion}
K.~M. Jatavallabhula \emph{et~al.}, ``Conceptfusion: Open-set multimodal 3d
  mapping,'' in \emph{Proc. RSS}, 2023.

\bibitem{gu2024conceptgraphs}
Q.~Gu \emph{et~al.}, ``Conceptgraphs: Open-vocabulary 3d scene graphs for
  perception and planning,'' in \emph{Proc. IEEE ICRA}, 2024.

\bibitem{werby2024hovsg}
A.~Werby, C.~Huang, M.~B{\"u}chner, A.~Valada, and W.~Burgard, ``Hierarchical
  open-vocabulary 3d scene graphs for language-grounded robot navigation,'' in
  \emph{Proc. RSS}, 2024.

\bibitem{schmid2024khronos}
L.~Schmid \emph{et~al.}, ``Khronos: A unified approach for spatio-temporal
  metric-semantic slam in dynamic environments,'' in \emph{Proc. RSS}, 2024.

\bibitem{dewan2017deep}
A.~Dewan, G.~L. Oliveira, and W.~Burgard, ``Deep semantic classification for 3d
  lidar data,'' in \emph{Proc. IEEE/RSJ IROS}, 2017.

\bibitem{marder2010office}
E.~Marder-Eppstein, E.~Berger, T.~Foote, B.~Gerkey, and K.~Konolige, ``The
  office marathon: Robust navigation in an indoor office environment,'' in
  \emph{IEEE Int. Conf. on Robotics and Automation (ICRA)}, 2010, pp. 300--307.

\bibitem{dynamem2024}
P.~Liu \emph{et~al.}, ``Dynamem: Online dynamic spatio-semantic memory for open world
  mobile manipulation,'' \emph{arXiv:2411.04999}, 2024.

\bibitem{dovsg2025}
Z.~Yan \emph{et~al.}, ``Dynamic
  open-vocabulary 3d scene graphs for long-term language-guided mobile
  manipulation,'' \emph{arXiv:2410.11989}, 2025.

\bibitem{dualmap2025}
J.~Jiang, Y.~Zhu, Z.~Wu, and J.~Song, ``Dualmap: Online open-vocabulary
  semantic mapping for natural language navigation in dynamic changing
  scenes,'' \emph{arXiv:2506.01950}, 2025.

\bibitem{kurenkov2023scenegraphmemory}
A.~Kurenkov \emph{et~al.}, ``Modeling dynamic
  environments with scene graph memory,'' in \emph{Proc. ICML}, vol. 202, 2023.

\bibitem{bajcsy1988active}
R.~Bajcsy, ``Active perception,'' \emph{Proceedings of the IEEE}, vol.~76,
  no.~8, pp. 966--1005, 1988.

\bibitem{bajcsy2018revisiting}
R.~Bajcsy, Y.~Aloimonos, and J.~K. Tsotsos, ``Revisiting active perception,''
  \emph{Autonomous Robots}, vol.~42, no.~2, pp. 177--196, 2018.

\bibitem{connolly1985nbv}
C.~I. Connolly, ``The determination of next best views,'' in \emph{Proc. IEEE ICRA}, vol.~2, 1985,
  pp. 432--435.

\bibitem{bircher2016receding}
A.~Bircher, M.~Kamel, K.~Alexis, H.~Oleynikova, and R.~Siegwart, ``Receding
  horizon ``next-best-view'' planner for {3D} exploration,'' in \emph{Proc. IEEE ICRA}, 2016, pp.
  1462--1468.

\bibitem{smith2012persistent}
S.~L. Smith, M.~Schwager, and D.~Rus, ``Persistent robotic tasks: Monitoring
  and sweeping in changing environments,'' \emph{IEEE Transactions on
  Robotics}, vol.~28, no.~2, pp. 410--426, 2012.

\bibitem{alamdari2014persistent}
S.~Alamdari, E.~Fata, and S.~L. Smith, ``Persistent monitoring in discrete
  environments: Minimizing the maximum weighted latency between observations,''
  \emph{The International Journal of Robotics Research}, vol.~33, no.~1, pp.
  138--154, 2014.

\bibitem{cho2003refresh}
J.~Cho and H.~Garcia-Molina, ``Effective page refresh policies for web
  crawlers,'' \emph{ACM Transactions on Database Systems}, vol.~28, no.~4, pp.
  390--426, 2003.

\bibitem{whittle1988restless}
P.~Whittle, ``Restless bandits: Activity allocation in a changing world,''
  \emph{Journal of Applied Probability}, vol. 25A, pp. 287--298, 1988.

\bibitem{kadota2018scheduling}
I.~Kadota, A.~Sinha, E.~Uysal-Biyikoglu, R.~Singh, and E.~Modiano, ``Scheduling
  policies for minimizing age of information in broadcast wireless networks,''
  \emph{IEEE/ACM Transactions on Networking}, vol.~26, no.~6, pp. 2637--2650,
  2018.

\bibitem{yates2021aoi}
R.~D. Yates, Y.~Sun, D.~R. Brown, S.~K. Kaul, E.~Modiano, and S.~Ulukus, ``Age
  of information: An introduction and survey,'' \emph{IEEE J. Sel. Areas Commun.}, vol.~39, no.~5, pp. 1183--1210, 2021.

\bibitem{avrachenkov2020changerate}
K.~Avrachenkov, K.~Patil, and G.~Thoppe, ``Change rate estimation and optimal
  freshness in web page crawling,'' in \emph{Proc. VALUETOOLS}, 2020.

\bibitem{kolve2017ai2thor}
E.~Kolve \emph{et~al.}, ``{AI2-THOR}: An interactive 3d environment for visual
  ai,'' \emph{arXiv:1712.05474}, 2017.

\bibitem{radford2021clip}
A.~Radford \emph{et~al.}, ``Learning
  transferable visual models from natural language supervision,'' in
  \emph{Proc. ICML}, 2021.

\end{thebibliography}
\end{document}